\newtheorem{theorem}{Theorem}
\newtheorem{lemma}[theorem]{Lemma}
\definecolor{iccvblue}{rgb}{0.21,0.49,0.74}
\title{Compact Recurrent Transformer with Persistent Memory}
\author{
Edison Mucllari\\
University of Kentucky\\
\and
Zachary Daniels\\
SRI International\\
\and
David Zhang\\
SRI International\\
\and
Qiang Ye\\
University of Kentucky\\
}
\begin{document}
\maketitle
\begin{abstract}
The Transformer architecture has shown significant success in many language processing and visual tasks. However, the method faces challenges in efficiently scaling to long sequences because the self-attention computation is quadratic with respect to the input length. To overcome this limitation, several approaches scale to longer sequences by breaking long sequences into a series of segments, restricting self-attention to local dependencies between tokens within each segment and using a memory mechanism to manage information flow between segments. However, these approached generally introduce additional compute overhead that restricts them from being used for applications where limited compute memory and power are of great concern (such as edge computing). We propose a novel and efficient Compact Recurrent Transformer (CRT), which combines shallow Transformer models that process short local segments with recurrent neural networks to compress and manage a single persistent memory vector that summarizes long-range global information between segments. We evaluate CRT on WordPTB and WikiText-103 for next-token-prediction tasks, as well as on the Toyota Smarthome video dataset for classification. CRT achieves comparable or superior prediction results to full-length Transformers in the language datasets while using significantly shorter segments (half or quarter size) and substantially reduced FLOPs. Our approach also demonstrates state-of-the-art performance on the Toyota Smarthome video dataset.
\end{abstract}    
\section{Introduction}
\label{sec:intro}

The Transformer architecture has marked a significant leap in language modeling \cite{radford2018improving, devlin2018bert, jaegle2021perceiver, chowdhery2023palm, raffel2020exploring, thoppilan2022lamda}, primarily attributed to the introduction of the self-attention mechanism \cite{vaswani2017attention}. This key component allows the model to weigh the importance of different words/tokens in a sequence while processing each word, enabling the ability to dynamically model context and relationships between tokens. The sequential nature of both text and video data has led to similar applications of Transformer architectures in video understanding, where self-attention has been applied on top of convolution layers or even in convolution-free architectures to capture temporal relationships. Despite its success across domains, the standard Transformer architecture is limited to processing fixed-length sequences, hindering its ability to capture longer-term dependencies. In theory, Transformers have the capability to process entire input sequences, with the self-attention mechanism attending to every token simultaneously. However, limitations in terms of memory usage and compute power, make this infeasible due to the quadratic complexity of the self-attention mechanism. Instead, to process long sequences requires dividing the sequence into smaller segments for training and processing each segment independently of the others.

To overcome this limitation, Transformer-XL \cite{dai2019transformer} breaks sequences into segments and attends to not only to the current segment but also to those from previous segment (without considering gradients for the memory component). The Recurrent Memory Transformer (RMT) \cite{bulatov2022recurrent} implements a memory mechanism through the inclusion of special read/write memory tokens to pass information between segments during self-attention. The Block Recurrent Transformer \cite{hutchins2022block} embeds a form of recurrence into the Transformer architecture. Perceiver IO \cite{jaegle2021perceiver} translates an expansive and arbitrary input into a consistent latent representation. The Memory Transformer \cite{burtsev2020memory} maintains the core Transformer model structure while incorporating memory through special memory tokens appended to the input sequence. Memformer \cite{wu2020memformer} introduces a distinct memory module to retain past hidden states in condensed representations.

In the video domain, TimeSformer\cite{bertasius2021space} adapts the Vision Transformer (ViT)\cite{dosovitskiy2020image} to video by extending the self-attention mechanism from the image space to the space-time 3D volume. It processes videos as sequences of patches extracted from individual frames. PI-ViT\cite{reilly2024just} further extends this approach by incorporating additional information through 2D and 3D Skeleton Induction Modules in the Toyota Smarthome video dataset \cite{das2019toyota}.

Our objective is to introduce an enhanced form of memory capable of extracting more information from the past while condensing memory into a single token. Our approach is based around the idea of \textbf{\textit{persistent memory}}: an explicit memory mechanism designed to summarize long sequences as a single state vector. This strategy of utilizing a single vector memory diverges from the common paradigm of using blocks of memory tokens in other recurrent Transformers.

The advantage of the proposed approach is that it allows for compact models that process shorter segments, while exploiting contextual information of longer sequences in a compute-efficient manner. Furthermore, initial experiments suggest that our proposed model can exhibit superior or comparable performance to other recurrent models despite a sizeable reduction in the number of FLOPs. This makes our model well-suited to an expanded range of applications that other Transformer-based models are incapable of addressing due to their compute-heavy operations. Specifically, this includes deployment on low Size, Weight, and Power (low-SWaP) devices, such as those needed for edge computing.

Prior to Transformers, Recurrent Neural Network (RNN) architectures \cite{rumelhart1985learning, hopfield1982neural}, such as GRU \cite{cho2014learning} and LSTM \cite{hochreiter1997long}, were the dominant machine learning-based method in language modeling, while for the video domain, Convolutional Neural Networks (CNNs) were the predominant approach. Compared to Transformers, which operate over fixed-length sequences, RNNs theoretically scale to arbitrary context lengths as they model the continual flow of information and summarize past information as a hidden state vector at each time step. In contrast to existing recurrent Transformer models, which implicitly manage memory as tokens that participate in the self-attention mechanism, RNNs explicitly manage memory using well-studied gating mechanisms, which may allow for more efficient consolidation of information over longer sequences when data is processed sequentially. However, RNNs are limited by difficulties in optimization due to exploding and vanishing gradients, and hidden states have information capacity limits which can become saturated when summarizing long sequences, leading to some information loss. Several works have focused on finding solutions to these problems by incorporating unitary or orthogonal matrices for recurrent weights in RNNs, accompanied by techniques to uphold and maintain these distinctive properties to preserve more memory. Among them is the NCGRU \cite{mucllari2022orthogonal}, which we explore to further improve the capabilities of the persistent memory mechanism.

Motivated by the strengths of both the Transformer and RNN architectures, we propose a novel \textbf{\textit{Compact Recurrent Transformer (CRT)}}. By integrating Transformers with RNNs, our architecture aims to harness improved local modeling of sequences using the self-attention capabilities of Transformers while ensuring the seamless flow of global information through the incorporation of RNN memory. During each iteration, the Transformer processes local segments, and the RNN utilizes the output embeddings of the Transformer to generate a final hidden state, serving as a global memory mechanism for subsequent iterations. A second RNN is used as a learned positional encoding mechanism to integrate the memory token of the previous segment with the tokens of the new segments.

Experimental evaluations were conducted on the Word PTB and Wiki Text-103 datasets for varying sequence lengths and architecture sizes. Our results demonstrate that the proposed CRT outperforms the simple Transformer architecture for the benchmark task of next token prediction as measured by perplexity or bytes per character. For Word PTB and Wiki Text-103, when looking at settings involving small segment lengths, our model even surpasses the performance of Transformer-XL, despite the memory in our CRT being represented as a vector, in contrast to the larger memory structure in Transformer-XL. For larger models/segment sizes, our approach performs about as well as Transformer-XL while requiring significantly fewer FLOPs. Beyond language modeling, our architecture also achieves state-of-the-art results on the Toyota Smarthome video dataset compared to other methods. These findings underscore the efficacy of our proposed architecture in overcoming limitations associated with fixed-context length in both language and video modeling domains.

It is important to note that for identical segment sizes, our CRT is comparable in cost to the standard Transformer with a small added overhead. By incorporating recurrence in both the positional encoding and memory along with the self-attention component, we observe an improved and simplified communication within the model. This combination allows our model, in instances such as Word PTB, to achieve results with a small CRT model comparable to a deeper Transformer-XL model, showing that CRT learned from the data with significantly fewer parameters, as well as state-of-the-art results in Toyota Smarthome.
One drawback of incorporating RNNs is the need of serial processing but since it only affects a small portion of the computations and we are interested in a compact model operating on short segments, this potential reduction in computational efficiency is minor and indeed not observed in our experiments.

Despite sharing similarities with other architectures like RMT and Block Recurrent Transformer, our model has some distinct characteristics. While other approaches operate over a block for memory, ours compresses history into a single memory vector. Additionally, our approach explicitly models memory using a pure RNN model as opposed to implicitly modeling memory, and we extend utilizing the RNN structure to the positional encoding for further improving model performance.

\section{Related Works and Backgrounds}
\label{sec:backgrounds}

\subsection{Related Works}
\label{subsec:relatedwork}

Numerous innovative architectures and algorithms aimed at enhancing various aspects of language modeling have been proposed \cite{bengio2000neural, mikolov2010recurrent, merity2016pointer, gal2016theoretically, grave1609efficient}), but Transformers \cite{vaswani2017attention} have become the dominant approach to language/sequence modeling.

Effective language models require incorporating a broader context; e.g., enriched memory plays a pivotal role in improving predictions. Prior approaches leveraged recurrent neural network (RNN) architectures such as GRU \cite{cho2014learning} and LSTM \cite{hochreiter1997long}. These architectures process data in sequence and maintain history/context information through a hidden state, which is updated every time step. However, training these models using backpropagation-through-time (BPTT) can result in vanishing/exploding gradients during training. Various solutions to this problem have emerged where recurrent weights are updated to maintain orthogonal/unitary attributes (e.g.,  multiplicative updates \cite{Wisdom16}, Givens rotations \cite{jing2019gated}, Householder reflections \cite{Mhammadi16}, and Cayley transforms \cite{helfrich2018orthogonal, madu18, mucllari2022orthogonal, pmlr-v97-lezcano-casado19a,helfrich2020eigenvalue}).

The introduction of Transformers in 2017 brought in new methodologies, showcasing state-of-the-art results \cite{thoppilan2022lamda, chowdhery2023palm, radford2018improving}. However, standard Transformers are restricted to processing a fixed-length context. In response to this challenge, various approaches have been proposed that introduce the concept of memory into Transformers to correlate information from different segments of a longer sequence. Transformer-XL \cite{dai2019transformer} extends the Transformer architecture by incorporating a segment-level recurrence mechanism, allowing the model to capture longer-term dependencies beyond a fixed length. However, this ability comes at a cost: a sizeable increase in the number of tokens used to compute the keys and values of the self-attention mechanism. Due to the quadratic complexity of self-attention, this results in an increase in compute and subsequently negatively effects power consumption and processing latency. Thus, while Transformer-XL typically outperforms the Transformer model in terms of performance, it generally is ill-suited for applications and devices requiring limited resource use.

Recurrent Memory Transformer (RMT) \cite{bulatov2022recurrent} builds upon Transformer-XL as a memory-augmented segment-level recurrent Transformer. Memory allows the model to pass information between segments of the long sequence with the help of recurrence. RMT models memory using sets of read and write tokens in combination with a backpropagation-through-time training mechanism. However, similar to Transformer-XL, computational issues arise, notably increased dimensions in the key and value matrices; albeit, the memory size of RMT is generally smaller than the memory of Transformer-XL.

The Block Recurrent Transformer (BRT) \cite{hutchins2022block} introduced an explicit recurrence model into the Transformer architecture. The BRT introduces a cell that operates on blocks of tokens during training, and is composed of a transformer layer that uses self-attention, cross-attention, and LSTM-style gates to compute a recurrent function over a set of state vectors and tokens. The BRT achieves a lower perplexity than Transformer-XL while running faster and using fewer parameters. This demonstrates the power of properly modeling recurrent relations within the Transformer architecture. We propose an alternative way of doing so that does not require full blocks of memory and instead summarizes memory as a single vector.

In video datasets, TimeSformer\cite{bertasius2021space} marked a significant advancement by introducing a convolution-free approach to video transformers. Similar to how language models uses different attention mechanisms, TimeSformer\cite{bertasius2021space} employs divided space-time attention where each layer first attends through the time dimension before attending through the spatial dimension. This architectural choice mirrors the self-attention mechanisms in language modeling that separate different types of contextual information, resulting in improved performance across various video benchmarks. Building upon TimeSformer's framework, PI-ViT\cite{reilly2024just} further enhances performance on the Toyota Smarthome\cite{das2019toyota} dataset by incorporating extra information from 2D and 3D skeleton data.

Our Compact Recurrent Transformer shares similarities with the aforementioned extensions of the Transformer architecture that break long sequences into segments, locally process segments, and propagate memory information between segments. However, there are distinctive differences. Transformer-XL, RMT, and Block Recurrent all rely on processing block-based memory mechanisms and their training does not involve passing gradient between segments. 
In contrast, CRT uses a single vector-based memory from the recurrent model and is trained by passing gradients through the memory vector to the previous segments by the BPTT algorithm. Additionally, despite integrating recurrence with the Transformer, CRT structurally decouples the global memory RNN from the local Transformer, thus our approach can generally be used with most Transformer-like architectures.

\subsection{Background}
\label{subsec:background}

Language modeling involves the integration of a corpus of tokens, represented as $\textbf{x}=(x_1, ..., x_T)$, where the goal is to estimate the joint probability $P(\textbf{x})$. By using probability theory, this joint probability is factorized as $P(\textbf{x}) = \prod _t P(x_t | x_{x_1, ..., x_{t-1}})$, indicating that it only requires the calculation of each conditional probability. In recent years, neural networks have been employed to model the conditional probability, which involves leveraging information from $x_1$ to $x_{t-1}$ and generating a probability distribution for the subsequent predicted token.

\textbf{Transformer and Self-Attention} \cite{vaswani2017attention}: The Transformer architecture relies on the Multi-Head Self-Attention mechanism. This component is crucial in ensuring that the model pays attention to every input token, enhancing its ability to capture complex patterns in the data.

The self-attention mechanism borrows ideas from information-retrieval, where keys $K$, queries $Q$, and values $Q$ are computed from input tokens $X$, these representations are used to identify related tokens, and information is aggregated between related tokens. For simplicity, we assume that $Q$, $K$ and $V$ represent the outputs of a single attention layer in the Transformer, and they include positional encoding information (information about each token's position in the segment). Multi-headed attention involves learning multiple output embeddings via training different randomly initialized key, query, and value embedding matrices and performing a weighted aggregation.

Consider $d_m$ as the embedding dimension. If the Transformer employs $h$ heads, the Self-Attention  for every head $H_i=\text{Attention}(Q^i, K^i, V^i)$ is expressed by:
\begin{equation} \label{eq1}
\begin{split}
&\text{Attention}(Q^i, K^i, V^i) 
  = \text{softmax}\left(\frac{Q^i  {K^i}^T}{\sqrt{d_k}}\right) V^i \\
&Q^i = X  W_Q^i, \ \ K^i = X  W_K^i, \ \ V^i = X  W_V^i
\end{split}
\end{equation}
Here $Q, K, V \in \mathbb{R}^{n \times d_m}$, $W_Q^i, W_K^i, W_V^i \in \mathbb{R}^{d_m \times d_k}$, where 
$Q, K, V$ are input embedding matrices, $W_Q^i, W_K^i, W_V^i$ are learnable parameters and $n$ is the segment length, $d_m$ is the embedding dimension, $h$ is the number of heads and $d_k$ is the dimension of a single head, which we set to be $\frac{d_m}{h}$.

The computation for the Self-Attention component is independently performed for all $h$ heads. The $\text{MultiHead(Q, K, V)}$ is then obtained by concatenating the results $H_1, ..., H_h$ and applying a weight matrix $W_O$:
$$\text{MultiHead(Q, K, V)} = \text{Concat}(H_1, ..., H_h) * W_O$$
where $W^O \in \mathbb{R}^{d_m \times d_m}$. The Multi-Head Self-Attention mechanism plays a key role in improving the model's capability to understand relationships within the input data.

\section{Compact Recurrent Transformer}
\label{sec:crt}

We introduce a new architecture that integrates Transformer and RNN architectures to address the limitations of fixed contextual learning in the standard Transformer model. While the Transformer architecture has demonstrated exceptional performance across various applications, including Natural Language Processing, when context length is fixed (i.e., all tokens of a sequence can be processed in parallel), it faces challenges in leveraging historical information when sequences exceed the context length and must be processed in segments. In contrast, RNNs excel in preserving and passing information through its hidden states, allowing them to capture essential context from previous iterations. The RNN module in our experiments consists of GRU \cite{cho2014learning} and NCGRU \cite{mucllari2022orthogonal}.

\begin{figure}[ht]
\vskip -0.1in
\begin{center}
\centerline{\includegraphics[width=0.9\columnwidth]{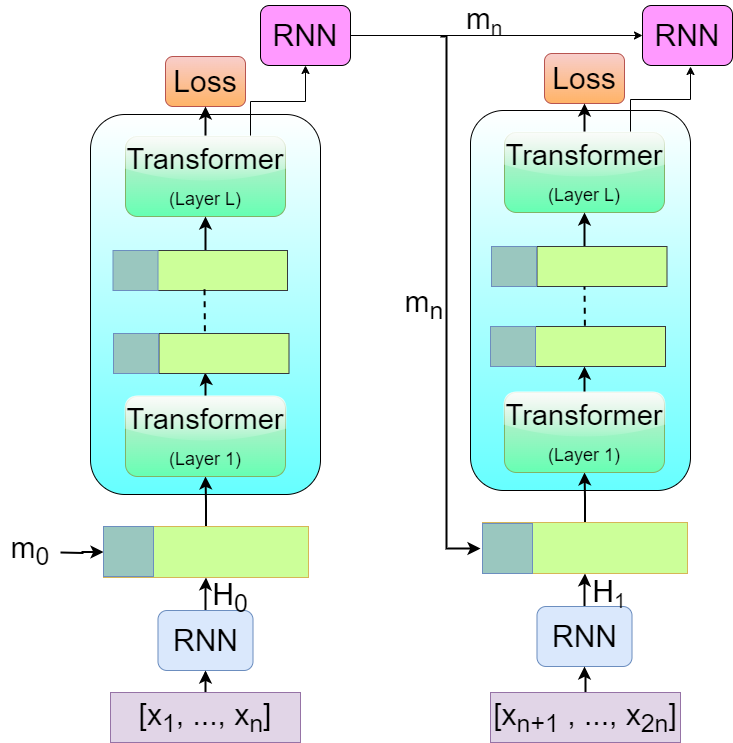}}
\caption{Compact Recurrent Transformer Architecture}
\label{srt_arch}
\end{center}
\vskip -0.2in
\end{figure}

As illustrated in Figure \ref{srt_arch}, our proposed model leverages the strengths of both architectures: recurrence and self-attention. During each iteration, a segment input goes through every layer of the Transformer model (with L denoting the number of Transformer layers, set to three or sixteen in our experiments). Following the final Transformer layer, the output undergoes processing through a linear layer and softmax to compute the loss function for training. Simultaneously, the output tensor enters an GRU/NCGRU model, where the last hidden state from the GRU/NCGRU serves as the memory for that iteration and is subsequently passed to the next iteration. We describe in more details below.

\textbf{Transformer with Memory Token}: As depicted in Figure \ref{srt_mem}, we incorporate a memory unit represented as a single token into a standard Transformer by concatenating it with the input segment. In each attention layer, the model attends not only to the input segment but also to the single vector memory token. Following each attention layer, the memory token undergoes updates from Transformer layer as in other tokens, except in the last layer where no output is generated from the memory token position. 

\begin{figure}[ht]
\begin{center}
\centerline{\includegraphics[width=0.9\columnwidth]{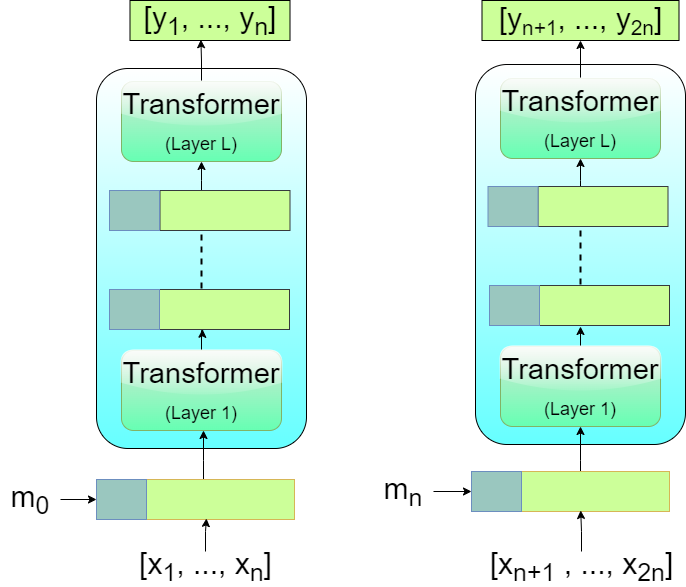}}
\caption{Transformer with Memory Token}
\label{srt_mem}
\end{center}
\vskip -0.2in
\end{figure}

\textbf{RNNs for Persistent Memory}: Figure \ref{rec_mem} depicts how the memory is generated  in our CRT architecture. The output of the Transformer model $[y_1, ..., y_n]$ is fed into the the memory RNN to produce hidden states  $[h_1, ..., h_n]$. The last hidden state $h_n$, rewritten as $m_n$, serves as the memory to be used in the next segment. Simultaneously, $m_n$ acts as the initial hidden state for the next iteration of the memory RNN. Extending this concept, the last hidden state for the current iteration, $h_{2n} = m_{2n}$, becomes the memory for the next iteration and serves as the initial hidden state for the subsequent memory RNN, creating a continuous chain. This demonstrates that the memory, as it passes through iterations, encompasses information in theory from all previous iterations.

\begin{figure}[ht]
\begin{center}
\centerline{\includegraphics[width=\columnwidth]{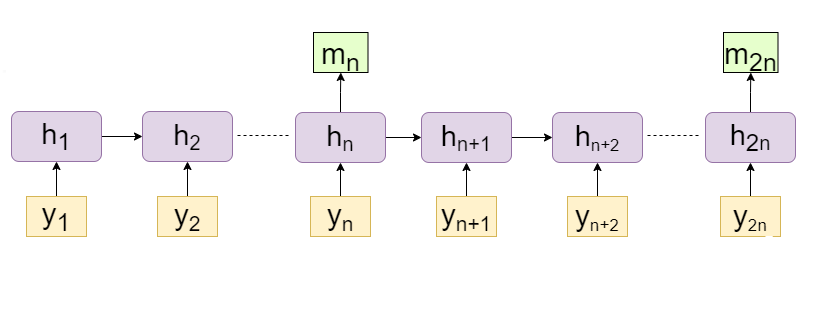}}
\caption{Recurrent Memory Architecture}
\label{rec_mem}
\end{center}
\vskip -0.2in
\end{figure}

\textbf{Recurrent Position Encoding:} A crucial aspect of our architecture is the Recurrent Position Encoding, designed to handle the positional encoding when processing information from the new input sequence. We introduce a new GRU/NCGRU model for encoding the relative position of a token in a segment of a longer sequence. Past work \cite{wang2019r} has shown that RNNs can serve as effective mechanisms for positional encoding, and such a positional encoding scheme has additional benefits for the CRT model. By utilizing a separate RNN model to derive the positional encoding for the new input segment, the CRT model enhances its comprehension and aids in ensuring the memory token and input embeddings are compatible before being processed by the self-attention mechanism. An additional benefit of using RNN position encoding is that the attention score is determined using the simplified expression:
$$A_{i, j}^{CRT} = E_{x_i}^T W_q^T W_{k} E_{x_j} $$
This formulation avoids additional terms involving relative positional encoding (i.e., as is done in Transformer-XL), allowing our model to achieve simplicity and eliminate the computations associated with relative position encoding without compromising effectiveness.

\begin{figure}[ht]
\begin{center}
\centerline{\includegraphics[width=\columnwidth]{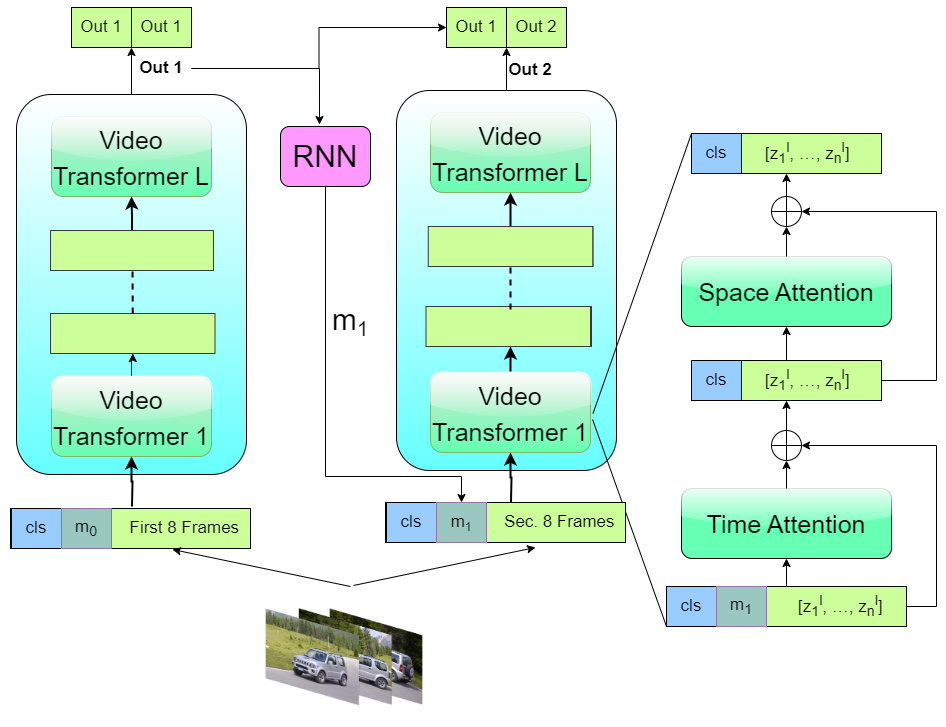}}
\caption{Compact Recurrent ViT Architecture}
\label{cr_vit}
\end{center}
\vskip -0.2in
\end{figure}

\textbf{Compact Recurrent ViT Architecture:} The Compact Recurrent ViT model employs a unique two-iteration training approach for each video. In the first iteration, the memory vector is initialized to zeros. After processing through the network, the output, except the class token, is extracted and passed through a Recurrent Neural Network (RNN) to generate the memory vector for the subsequent iteration. As illustrated in Figure \ref{cr_vit}, the memory token is exclusively utilized during the Time Attention layer. While other architectural components draw inspiration from the TimeSformer model, the specific memory token mechanism with RNN is an original contribution of this approach. The method focuses on enhancing inter-frame information processing capabilities through this novel token strategy. To maximize information extraction across both iterations, the model implements a specific token strategy. During the first iteration, the class token is duplicated, with the input to the final classification layer being [cls token, cls token]. In the second iteration, the input becomes [prev cls token, cls token], where prev cls token represents the class token from the previous iteration's output. A critical information-passing mechanism is also employed: the class token from the second iteration is added to the output class token from the previous iteration before traversing the Transformer layers. This approach enables more comprehensive information transfer between iterations, enriching the model's understanding of the video content.



\section{Complexity and Gradient Analysis}
\label{sec:derivatives}

We give flop counts for one forward propagation of each of the Transformer, XL, and CRT-GRU models in Table 1. Ours is comparable to Transformer that has additional cost related to the relative positional encoding, and less than XL. Also listed is the parameter counts, where ours is also less owing to the parameters associated with relative positional encoding. A significant advantage of CRT is in the memory cost, where we need one vector for memory vs $n$ vector memory for XL.

Gradient behavior determines the stability and quality of how models learn. To analyze memory flow across iterations, we investigate the derivative of the current iteration's output, denoted as $y_{k}$, where $tn+1 \leq k \leq (t+1)n$, with respect to the Transformer output $y_i$ of one of the previous iterations, where $pn+1 \leq i \leq (p+1)n$ and $p<t$. In this context, $k$ and $i$ denote the token positions in the entire sequence, while $n$ signifies the segment length. Then, this gradient indicates the information flow from $y_i$ to $y_k$. 

We can prove (see Appendix for details) that 
\begin{equation}
    \label{eq:boundequation}
    \begin{split}
        \frac{\partial y_k}{\partial y_i} & = \frac{\partial y_k}{\partial m_{tn}}  \frac{\partial h_{tn}}{\partial h_{i}}  \frac{\partial h_i}{\partial y_i} \\ 
     \left\| \frac{\partial y_k}{\partial y_i} \right\|_2   & \leq   (\alpha + \beta\|U_c\|_2)^{tn-i}  
     \left\|\frac{\partial y_k}{\partial m_{tn}}  \right\|_2        
     \left\|  \frac{\partial h_j}{\partial y_i} \right\|_2          
    \end{split}
\end{equation}
where $U_c$ and $\alpha, \beta$ are defined in Lemma A.1 in Appendix.
In particular, in the case that NCGRU is used, when $u_t$ and $r_t$ are approximately either the zero vector or the vector of all ones, the bound becomes:
\begin{align}
     \left\| \frac{\partial y_k}{\partial y_i} \right\|_2   & \leq    
     \left\|\frac{\partial y_k}{\partial m_{tn}}  \right\|_2        
     \left\|  \frac{\partial h_j}{\partial y_i} \right\|_2     
\end{align}

With $y_k$ and $y_i$ representing the tokens at positions $k$ and $i$ resp.,
(\ref{eq:boundequation}) signifies the information flow from token $i$ to GRU state $h_i$, which is transmitted to GRU state $h_{tn}$, and finally to $y_j$ by the $t$-th Transformer through the memory $m_{2n}$. In particular, in the case of NCGRU, we have an improved bound on $\frac{\partial h_{tn}}{\partial h_{i}}$, which allows passing memory across different segments.

\begin{table*}[t]
\caption{Complexity Comparison.  $L$ is the number of layers, $n$ is the segment length, and $d_m$ is the embedding dimension.}
\label{complexity}
\begin{center}
\begin{small}
\begin{sc}
\begin{tabular}{lccr}
\toprule
Model &  FLOPs & \# Params \\
\midrule
Tr     & $L(20nd_m^2+6n^2d_m) + O(nd_m)$  & $L(10d_m^2+6d_m)$ \\
Tr-XL  & $L(28nd_m^2+12n^2d_m + 6n^2)+ O(nd_m)$ & $L(10d_m^2+6d_m)$ \\
CRT-GRU   & $nd_m^2(12L+12)+8Ln^2d_m + O(nd_m)$ & $(12+8L)d_m^2+(6+4L)d_m$ \\
\bottomrule
\end{tabular}
\end{sc}
\end{small}
\end{center}
\vskip -0.1in
\end{table*}
\section{Experiments}
\label{sec:experiments}

We evaluate the effectiveness of the Compact Recurrent Transformer (CRT) by conducting comprehensive experiments across diverse datasets: Word PTB \cite{marcus1993building} and WikiText-103 \cite{merity2016pointer} for language modeling, and Toyota Smarthome for video recognition. For language modeling, we compare CRT against Transformer-XL and standard Transformer architectures as baselines, exploring three-layer models for all datasets and sixteen-layer models for Word PTB and WikiText-103, using either GRU or NCGRU for the RNN component. In video recognition tasks, we employ CRT (CR-ViT) with GRU as the RNN component, comparing it with TimeSformer and other SOTA models on the Toyota Smarthome dataset such as PI-ViT. In all tables, segment length refers to the number of tokens in each segment and memory length refers to the number of vectors used as memory. This multi-domain evaluation demonstrates CRT's versatility and performance.

\begin{figure}[ht]
\begin{center}
\centerline{\includegraphics[width=\columnwidth]{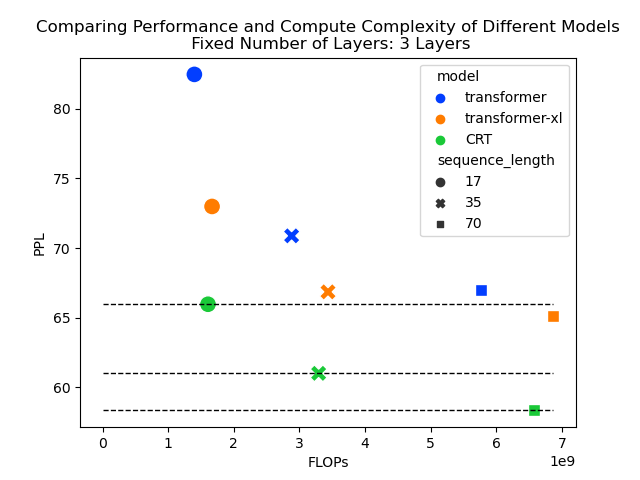}}
\caption{Comparing the CRT model to the baseline Transformer and Transformer-XL for a 3-layer model in terms of PPL and FLOPs on different segment sizes for the Word PDB dataset. Given the same segment length, the CRT outperforms the baseline Transformer and Transformer-XL models. An interesting observation is that The CRT model using the smaller segment size of 17 tokens performs about as well as the Transformer and Transformer-XL models with segment sizes of 70 tokens.}
\label{fig:performance_v_flops_3_layer}
\end{center}
\vskip -0.2in
\end{figure}

\begin{figure}[ht]
\begin{center}
\centerline{\includegraphics[width=\columnwidth]{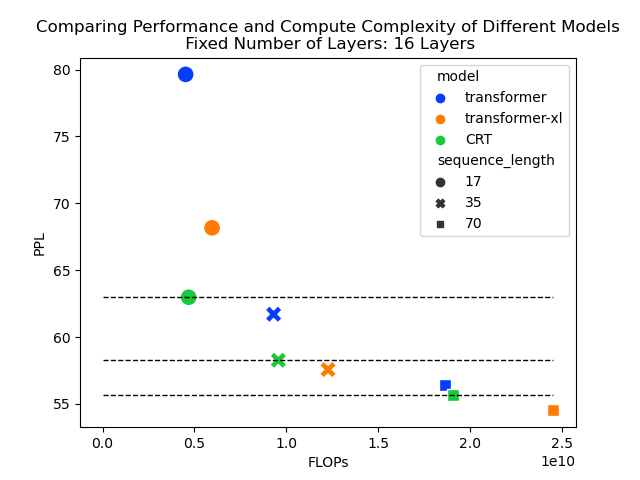}}
\caption{Comparing the CRT model to the baseline Transformer and Transformer-XL for a 16-layer model in terms of PPL and FLOPs on different segment sizes for the Word PDB dataset. Note that in this case, the CRT model always outperforms the baseline Transformer model at the same segment size and is generally close to the performance of the baseline Transformer model with 2x segment size. At the same segment size, CRT is competitive with Transformer-XL, but requires noticeably fewer FLOPs.}
\label{fig:performance_v_flops_16_layer}
\end{center}
\vskip -0.4in
\end{figure}

\subsection{Word PTB}
\label{subsec:wordptb}

The Word PTB dataset serves as a valuable benchmark, comprising English-language Wall Street Journal articles. The dataset is partitioned into training, validation, and test sets, containing of 888 thousand, 70 thousand, and 79 thousand words, respectively and a vocabulary of 10 thousand words.

\begin{table}[ht]
\caption{Word PTB Perplexity (PPL) results on 3-layer models. Three-layer CRT processing segments of 70 tokens achieves a perplexity of 58.3 while the sixteen layer Transformer also processing segments of 70 tokens achieves a perplexity of 56.4}
\label{wptbthree}
\begin{center}
\begin{small}
\begin{sc}
\begin{tabular}{lccr}
\toprule
Model & Memory/Seg Len & PPL \\
\midrule
Transformer    & 0 / 70 & 67.0 \\
Transformer-XL & 70 / 70 & 65.1 \\
CRT \textbf{(ours)}  & 1 / 70 & 58.3 \\
\midrule
Transformer    & 0 / 35 & 70.9 \\
Transformer-XL & 35 / 35 & 66.9 \\
CRT \textbf{(ours)} & 1 /  35 & 61.0 \\
\midrule
Transformer    & 0 /  17 & 82.5 \\
Transformer-XL & 17 /  17 & 73.0 \\
CRT \textbf{(ours)} & 1 / 17 & 66.0 \\
\bottomrule
\end{tabular}
\end{sc}
\end{small}
\end{center}
\vskip -0.1in
\end{table}

Tables \ref{wptbthree} and \ref{wptbsixteen} present the results obtained using NCGRU as the recurrence mechanism, with additional comparative analysis against GRU provided in Section \ref{rec_exp_sup}. These tables highlight that CRT consistently outperforms both Transformer and Transformer-XL models across a range of configurations.
For both shallow (3-layer) and deep (16-layer) architectures, CRT achieves superior or comparable perplexity results while processing significantly shorter segments. 

For the three-layer model, when segment length is set to 70 (standard setting per Transformer-XL \cite{dai2019transformer} results on Word PTB), CRT processes smaller segments (17 tokens) and surpasses the Transformer's performance with 4x longer sequence lengths. Unlike Transformer-XL, which doubles the number of keys and values to extend memory, CRT adds only a single memory token, effectively increasing the segment length by just one. In the 16-layer configuration with 70-token segments, this results in a reduction of $5e9$ FLOPs compared to Transformer-XL.

A key finding is that CRT maintains performance as segment length decreases. The 16-layer CRT achieves 55.7 perplexity with 70-token segments and only drops to 63.0 with 17-token segments. In contrast, the standard Transformer's performance deteriorates dramatically from 56.4 to 79.6 perplexity when segment length is reduced from 70 to 17 tokens. This demonstrates the effectiveness of our persistent memory mechanism—the RNN efficiently compresses long sequences and manages information propagation, conserving computational resources in the self-attention mechanism while preserving context between segments.

\begin{table}[ht]
\caption{Word PTB Perplexity (PPL) results on 16-layer models. Results for Transformer-XL with 70 tokens from \cite{dai2019transformer}.}
\label{wptbsixteen}
\begin{center}
\begin{small}
\begin{sc}
\begin{tabular}{lccr}
\toprule
Model & Memory  / Seg Len & PPL \\
\midrule
Transformer    & 0 / 70 & 56.4 \\
Transformer-XL & 70 / 70 & 54.5 \\
CRT \textbf{(ours)}   & 1 / 70 & 55.7 \\
\midrule
Transformer    & 0 / 35 & 61.7 \\
Transformer-XL & 35 / 35 & 57.6 \\
CRT \textbf{(ours)}   & 1 / 35 & 58.3 \\
\midrule
Transformer    & 0 / 17 & 79.6 \\
Transformer-XL & 17 / 17 & 68.2 \\
CRT \textbf{(ours)}   & 1 / 17 & 63.0 \\
\bottomrule
\end{tabular}
\end{sc}
\end{small}
\end{center}
\vskip -0.3in
\end{table}

\subsection{WikiText-103}
\label{subsec:wikitext103}

The WikiText-103 dataset is a large word-level language modeling benchmark that focuses on long-term dependency. It consists of 103 million training tokens extracted from 28,000 articles, each with an average length of 3600 tokens. This dataset allows for the evaluation of a model's ability to handle long-term dependencies across diverse content. Notably, the standard sequence length for WikiText-103 is typically set to 150 tokens, but we also experiment with shorter sequence lengths. This is because our primary goal is to develop a model that can be efficiently deployed on smaller edge devices, where computational resources are limited.

Similarly to the experiments conducted on the Word PTB dataset, Tables \ref{wikitext103three} and \ref{wikitext103sixteen} present the results obtained using the NCGRU recurrence model, along with a detailed comparative analysis between NCGRU and GRU in Section \ref{rec_exp_sup}. The experimental outcomes in Tables \ref{wikitext103three} and \ref{wikitext103sixteen} consistently mirror those observed on the Word PTB dataset, highlighting the superiority of our proposed CRT model over the baseline Transformer architecture.
Furthermore, our Compact Recurrent Transformer performs near (and sometimes better than) the Transformer-XL model, achieving perplexities within one to two points of the Transformer-XL for all settings. This is noteworthy considering that the number of keys and values used in the self-attention mechanism for the Transformer-XL models approximately doubles compared to the baseline Transformer whereas for the CRT, the segment length is extended only by one by compressing memory into just a single vector from the RNN hidden state.

\begin{table}[ht]
\caption{Wiki Text-103 PPL results on 3-layer models}
\label{wikitext103three}
\begin{center}
\begin{small}
\begin{sc}
\begin{tabular}{lccr}
\toprule
Model & Memory / Seg Len & PPL \\
\midrule
Transformer    & 0 / 150  & 39.1 \\
Transformer-XL & 150 / 150 & 32.6 \\
CRT \textbf{(ours)}   & 1 / 150 & 31.8 \\
\midrule
Transformer    & 0 / 70  & 47.3 \\
Transformer-XL & 70 / 70 & 36.5 \\
CRT \textbf{(ours)}   & 1 / 70 & 35.8 \\
\midrule
Transformer    & 0 / 35  & 62.4 \\
Transformer-XL & 35 / 35 & 43.9 \\
CRT \textbf{(ours)}   & 1 / 35 & 43.0 \\
\bottomrule
\end{tabular}
\end{sc}
\end{small}
\end{center}
\vskip -0.1 in
\end{table}

\begin{table}[ht]
\caption{Wiki Text-103 PPL results on 16-layer models. Results for Transformer-XL with 150 tokens is from \cite{dai2019transformer}.}
\label{wikitext103sixteen}
\begin{center}
\begin{small}
\begin{sc}
\begin{tabular}{lccr}
\toprule
Model & Memory / Seg Len & PPL \\
\midrule
Transformer    & 0 / 150  & 30.0 \\
Transformer-XL & 150 / 150 & 24.0 \\
CRT \textbf{(ours)} & 1 / 150 & 25.8 \\
\midrule
Transformer    & 0 / 70  & 37.4 \\
Transformer-XL & 70 / 70 & 27.6 \\
CRT \textbf{(ours)} & 1 / 70 & 28.8 \\
\midrule
Transformer    & 0 / 35  & 51.9 \\
Transformer-XL & 35 / 35 & 34.2 \\
CRT \textbf{(ours)} & 1 / 35 & 35.8 \\
\bottomrule
\end{tabular}
\end{sc}
\end{small}
\end{center}
\vskip -0.2in
\end{table}

Table \ref{wikitext103complexity} shows that the inference times for CRT and Transformer are comparable, whereas Transformer-XL's inference time is nearly 50\% higher. 
While one might expect the RNNs of CRT to increase processing times due to processing data in sequence instead of in parallel, the RNNs used in our experiments are light-weight, so their compute cost is dominated by the cost of the Transformer. Additionally, our approach requires fewer parameters compared to both Transformer and Transformer-XL, yet it outperforms the baseline Transformer and is competitive with Transformer-XL across various segment lengths.

\begin{table}[ht]
\caption{Wiki Text-103 PPL number of parameters and Inference time comparison on 16-layer models}
\label{wikitext103complexity}
\begin{center}
\begin{small}
\begin{sc}
\begin{tabular}{lccr}
\toprule
Model & \# Params & Inf. Time (ms/batch) \\
\midrule
Transformer    & 151M  & 997 \\
Transformer-XL & 151M  & 1421.3 \\
CRT - GRU \textbf{(ours)}  & 150M & 997.5  \\
\bottomrule
\end{tabular}
\end{sc}
\end{small}
\end{center}
\vskip -0.2in
\end{table}

\subsection{Toyota Smarthome}
\label{subsec:toyota}

Toyota Smarthome\cite{das2019toyota} is a large real-world video dataset capturing activities of daily living (ADL). It contains 16K RGB+D clips spanning 31 activity classes performed by seniors in a smart home environment. The dataset features both fine-grained activities (e.g., drinking from a cup) and composite activities like cooking. Recordings were made across 3 different scenes using 7 cameras. Importantly, subjects received no instructions on how to perform these activities, resulting in natural behaviors that closely resemble real-world scenarios, making the dataset particularly valuable for practical applications. The primary task associated with this dataset is a classification problem where models must correctly identify which of the 31 predefined activity classes is being performed in each video clip.

\begin{table}[ht]
\caption{Comparison with state-of-the-art methods on Toyota Smarthome dataset. Mean class accuracy (mCA) reported for cross-subject (CS) evaluation.}
\begin{center}
\begin{small}
\begin{sc}
\begin{tabular}{lccr}
\toprule
Model & mCA \\
\midrule
2s-AGCN & 60.9 \\
PoseC3D & 50.6 \\
Hyperformer & 57.5 \\
VPN & 65.2 \\
AssembleNet++ & 63.6 \\
UNIK & 64.6 \\
Video Swin & 69.8 \\
MotionFormer & 65.8 \\
MMNET & 70.1 \\
TimeSformer  & 71.5$^*$ \\
\textbf{CRViT} \textbf{(ours)}  & \textbf{73.4} \\
\midrule
TimeSformer    & 68.4 \\
\hspace{0.2cm} +2D-SIM & 72.5 \\
\hspace{0.2cm} +3D-SIM & 71.4 \\
PI-ViT (2D-SIM + 3D-SIM) & 72.9 \\
\bottomrule
\end{tabular}
\end{sc}
\end{small}
\end{center}
\vskip -0.1in
\end{table}

Our Compact Recurrent Vision Transformer (CR-ViT) model, which employs a Gated Recurrent Unit (GRU) as its recurrent component, achieves state-of-the-art (SOTA) performance on the Toyota Smarthome dataset. Notably, the model accomplishes this breakthrough without relying on additional 2D and 3D pose information that other models may utilize. The model's development involved a strategic two-stage training approach. First, we pretrained the TimeSformer architecture on Toyota Smarthome data, using a methodology similar to PI-ViT but omitting the 2D and 3D Skeleton Induction Module (SIM). The results are recorded at the Table with the $^*$. We initialized the model with the Kinetics 400 pretrained weights from TimeSformer and conducted an initial training phase of 15 epochs. Subsequently, we integrated the changes for CR-ViT and train for 40 epochs. This extended training was crucial, as the model demonstrated continued learning beyond the 15-epochs. The results are significant: CR-ViT outperforms PI-ViT by achieving mCA of 73.5, compared to PI-ViT's 72.9, using only the video dataset and corresponding labels. This superior performance was realized without incorporating supplementary pose information, highlighting the model's innovative design. 
Our experiments demonstrate that combining Transformer architecture with RNN components in the time dimension can significantly enhance video understanding and classification performance.

\subsection{Ablation Studies}
\label{subsec:ablation}

\begin{table}[h]
\caption{Ablation studies on Word PTB using a 3-layer model and segment size of 70 tokens}
\label{ablation}
\begin{center}
\begin{small}
\begin{sc}
\begin{tabular}{lcr}
\toprule
Model & PPL \\
\midrule
Transformer    & 67.0  \\
Transformer + RNN memory  & 64.4  \\
Transformer + RNN pos enc & 61.0 \\
CRT (Trans. + RNN memory + RNN pos enc) & 58.3 \\
\bottomrule
\end{tabular}
\end{sc}
\end{small}
\end{center}
\vskip -0.2in
\end{table}

We also conducted an ablation study to investigate  the different components of our proposed approach. We compare the baseline Transformer with three models: Transformer with RNN-based memory only, Tranformer with RNN-based positional encoding only, and the full CRT model. We report the results on the word PTB dataset in Table \ref{ablation} where we see steady improvement as additional components are added. It is interesting to note the strong effect of adding an RNN-based positional encoding, and we also see that the RNN memory and the RNN positional encoding have complimentary effects.
\section{Conclusion}
\label{sec:conclusion}

We have demonstrated how persistent memory can be added to Transformers using an explicit RNN-based mechanism. This mechanism allows for compact models that process long sequences by operating over shorter segments. This significantly reduces computational complexity compared to other recurrent Transformer models while achieving equal or better performance, and the proposed approach exhibits notably higher performance compared to running inference using a baseline fixed-length Transformer operating over independent segments. These capabilities open new opportunities for running light-weight, but powerful models on low-SWaP devices. For example, we envision CRT could be useful in future applications where Transformer-based models are run on distributed edge devices instead of large server farms of high-powered but inefficient GPUs. Our experimental results suggest that during inference, each edge device could run shallow models that process short segments of longer sequences and still achieve results comparable to deep Transformer models that process very long sequences. In the extreme case involving models trained with context lengths of hundreds of thousands of tokens at a time (e.g., GPT4 \cite{achiam2023gpt}), the savings in terms of power and latency during inference would be very significant.
\section{Acknowledgement}
\label{acknowledgement}
This research is based upon work supported in part by NSF under the grants  IIS-2327113, DMS-2208314, and ITE-2433190, and by the Office of the Director of National Intelligence (ODNI), Intelligence Advanced Research Projects Activity (IARPA), via Contract No: 2022-21100600001. The views and conclusions contained herein are those of the authors and should not be interpreted as necessarily representing the official policies, either expressed or implied, of ODNI, IARPA, or the U.S. Government. The U.S. Government is authorized to reproduce and distribute reprints for governmental purposes notwithstanding any copyright annotation therein.

We would like to thank the University of Kentucky Center for Computational Sciences and Information Technology Services Research Computing for their support and use of the Lipscomb Compute Cluster and associated research computing resources. We also extend our appreciation to Dominick Reilly for his valuable help in our work on the Toyota Smarthome Dataset.
{
    \small
    \bibliographystyle{ieeenat_fullname}
    \bibliography{main}

\begin{thebibliography}{36}
\providecommand{\natexlab}[1]{#1}
\providecommand{\url}[1]{\texttt{#1}}
\expandafter\ifx\csname urlstyle\endcsname\relax
  \providecommand{\doi}[1]{doi: #1}\else
  \providecommand{\doi}{doi: \begingroup \urlstyle{rm}\Url}\fi

\bibitem[Achiam et~al.(2023)Achiam, Adler, Agarwal, Ahmad, Akkaya, Aleman,
  Almeida, Altenschmidt, Altman, Anadkat, et~al.]{achiam2023gpt}
Josh Achiam, Steven Adler, Sandhini Agarwal, Lama Ahmad, Ilge Akkaya,
  Florencia~Leoni Aleman, Diogo Almeida, Janko Altenschmidt, Sam Altman,
  Shyamal Anadkat, et~al.
\newblock Gpt-4 technical report.
\newblock \emph{arXiv preprint arXiv:2303.08774}, 2023.

\bibitem[Bengio et~al.(2000)Bengio, Ducharme, and Vincent]{bengio2000neural}
Yoshua Bengio, R{\'e}jean Ducharme, and Pascal Vincent.
\newblock A neural probabilistic language model.
\newblock \emph{Advances in neural information processing systems}, 13, 2000.

\bibitem[Bertasius et~al.(2021)Bertasius, Wang, and
  Torresani]{bertasius2021space}
Gedas Bertasius, Heng Wang, and Lorenzo Torresani.
\newblock Is space-time attention all you need for video understanding?
\newblock In \emph{ICML}, page~4, 2021.

\bibitem[Bulatov et~al.(2022)Bulatov, Kuratov, and
  Burtsev]{bulatov2022recurrent}
Aydar Bulatov, Yury Kuratov, and Mikhail Burtsev.
\newblock Recurrent memory transformer.
\newblock \emph{Advances in Neural Information Processing Systems},
  35:\penalty0 11079--11091, 2022.

\bibitem[Burtsev et~al.(2020)Burtsev, Kuratov, Peganov, and
  Sapunov]{burtsev2020memory}
Mikhail~S Burtsev, Yuri Kuratov, Anton Peganov, and Grigory~V Sapunov.
\newblock Memory transformer.
\newblock \emph{arXiv preprint arXiv:2006.11527}, 2020.

\bibitem[Cho et~al.(2014)Cho, Van~Merri{\"e}nboer, Gulcehre, Bahdanau,
  Bougares, Schwenk, and Bengio]{cho2014learning}
Kyunghyun Cho, Bart Van~Merri{\"e}nboer, Caglar Gulcehre, Dzmitry Bahdanau,
  Fethi Bougares, Holger Schwenk, and Yoshua Bengio.
\newblock Learning phrase representations using rnn encoder-decoder for
  statistical machine translation.
\newblock \emph{arXiv preprint arXiv:1406.1078}, 2014.

\bibitem[Chowdhery et~al.(2023)Chowdhery, Narang, Devlin, Bosma, Mishra,
  Roberts, Barham, Chung, Sutton, Gehrmann, et~al.]{chowdhery2023palm}
Aakanksha Chowdhery, Sharan Narang, Jacob Devlin, Maarten Bosma, Gaurav Mishra,
  Adam Roberts, Paul Barham, Hyung~Won Chung, Charles Sutton, Sebastian
  Gehrmann, et~al.
\newblock Palm: Scaling language modeling with pathways.
\newblock \emph{Journal of Machine Learning Research}, 24\penalty0
  (240):\penalty0 1--113, 2023.

\bibitem[Dai et~al.(2019)Dai, Yang, Yang, Carbonell, Le, and
  Salakhutdinov]{dai2019transformer}
Zihang Dai, Zhilin Yang, Yiming Yang, Jaime Carbonell, Quoc~V Le, and Ruslan
  Salakhutdinov.
\newblock Transformer-xl: Attentive language models beyond a fixed-length
  context.
\newblock \emph{arXiv preprint arXiv:1901.02860}, 2019.

\bibitem[Das et~al.(2019)Das, Dai, Koperski, Minciullo, Garattoni, Bremond, and
  Francesca]{das2019toyota}
Srijan Das, Rui Dai, Michal Koperski, Luca Minciullo, Lorenzo Garattoni,
  Francois Bremond, and Gianpiero Francesca.
\newblock Toyota smarthome: Real-world activities of daily living.
\newblock In \emph{Proceedings of the IEEE/CVF international conference on
  computer vision}, pages 833--842, 2019.

\bibitem[Devlin et~al.(2018)Devlin, Chang, Lee, and Toutanova]{devlin2018bert}
Jacob Devlin, Ming-Wei Chang, Kenton Lee, and Kristina Toutanova.
\newblock Bert: Pre-training of deep bidirectional transformers for language
  understanding.
\newblock \emph{arXiv preprint arXiv:1810.04805}, 2018.

\bibitem[Dosovitskiy et~al.(2020)Dosovitskiy, Beyer, Kolesnikov, Weissenborn,
  Zhai, Unterthiner, Dehghani, Minderer, Heigold, Gelly,
  et~al.]{dosovitskiy2020image}
Alexey Dosovitskiy, Lucas Beyer, Alexander Kolesnikov, Dirk Weissenborn,
  Xiaohua Zhai, Thomas Unterthiner, Mostafa Dehghani, Matthias Minderer, Georg
  Heigold, Sylvain Gelly, et~al.
\newblock An image is worth 16x16 words: Transformers for image recognition at
  scale.
\newblock \emph{arXiv preprint arXiv:2010.11929}, 2020.

\bibitem[Gal and Ghahramani(2016)]{gal2016theoretically}
Yarin Gal and Zoubin Ghahramani.
\newblock A theoretically grounded application of dropout in recurrent neural
  networks.
\newblock \emph{Advances in neural information processing systems}, 29, 2016.

\bibitem[Grave et~al.()Grave, Joulin, Ciss{\'e}, Grangier, and
  J{\'e}gou]{grave1609efficient}
E Grave, A Joulin, M Ciss{\'e}, D Grangier, and H J{\'e}gou.
\newblock Efficient softmax approximation for gpus. arxiv 2016.
\newblock \emph{arXiv preprint arXiv:1609.04309}.

\bibitem[Helfrich and Ye(2020)]{helfrich2020eigenvalue}
Kyle Helfrich and Qiang Ye.
\newblock Eigenvalue normalized recurrent neural networks for short term
  memory.
\newblock In \emph{Proceedings of the AAAI Conference on Artificial
  Intelligence}, pages 4115--4122, 2020.

\bibitem[Helfrich et~al.(2018)Helfrich, Willmott, and
  Ye]{helfrich2018orthogonal}
Kyle Helfrich, Devin Willmott, and Qiang Ye.
\newblock Orthogonal recurrent neural networks with scaled cayley transform.
\newblock In \emph{International Conference on Machine Learning}, pages
  1969--1978. PMLR, 2018.

\bibitem[Hochreiter and Schmidhuber(1997)]{hochreiter1997long}
Sepp Hochreiter and J{\"u}rgen Schmidhuber.
\newblock Long short-term memory.
\newblock \emph{Neural computation}, 9\penalty0 (8):\penalty0 1735--1780, 1997.

\bibitem[Hopfield(1982)]{hopfield1982neural}
John~J Hopfield.
\newblock Neural networks and physical systems with emergent collective
  computational abilities.
\newblock \emph{Proceedings of the national academy of sciences}, 79\penalty0
  (8):\penalty0 2554--2558, 1982.

\bibitem[Hutchins et~al.(2022)Hutchins, Schlag, Wu, Dyer, and
  Neyshabur]{hutchins2022block}
DeLesley Hutchins, Imanol Schlag, Yuhuai Wu, Ethan Dyer, and Behnam Neyshabur.
\newblock Block-recurrent transformers.
\newblock \emph{Advances in Neural Information Processing Systems},
  35:\penalty0 33248--33261, 2022.

\bibitem[Jaegle et~al.(2021)Jaegle, Borgeaud, Alayrac, Doersch, Ionescu, Ding,
  Koppula, Zoran, Brock, Shelhamer, et~al.]{jaegle2021perceiver}
Andrew Jaegle, Sebastian Borgeaud, Jean-Baptiste Alayrac, Carl Doersch, Catalin
  Ionescu, David Ding, Skanda Koppula, Daniel Zoran, Andrew Brock, Evan
  Shelhamer, et~al.
\newblock Perceiver io: A general architecture for structured inputs \&
  outputs.
\newblock \emph{arXiv preprint arXiv:2107.14795}, 2021.

\bibitem[Jing et~al.(2019)Jing, Gulcehre, Peurifoy, Shen, Tegmark, Soljacic,
  and Bengio]{jing2019gated}
Li Jing, Caglar Gulcehre, John Peurifoy, Yichen Shen, Max Tegmark, Marin
  Soljacic, and Yoshua Bengio.
\newblock Gated orthogonal recurrent units: On learning to forget.
\newblock \emph{Neural computation}, 31\penalty0 (4):\penalty0 765--783, 2019.

\bibitem[Lezcano-Casado and
  Mart\'{\i}nez-Rubio(2019)]{pmlr-v97-lezcano-casado19a}
Mario Lezcano-Casado and David Mart\'{\i}nez-Rubio.
\newblock Cheap orthogonal constraints in neural networks: A simple
  parametrization of the orthogonal and unitary group.
\newblock In \emph{Proceedings of the 36th International Conference on Machine
  Learning}, pages 3794--3803. PMLR, 2019.

\bibitem[Maduranga et~al.(2019)Maduranga, Helfrich, and Ye]{madu18}
Kehelwala~Dewage Maduranga, Kyle Helfrich, and Qiang Ye.
\newblock Complex unitary recurrent neural networks using scaled cayley
  transform.
\newblock \emph{Proceedings of the AAAI Conference on Artificial Intelligence},
  33:\penalty0 4528--4535, 2019.

\bibitem[Marcus et~al.(1993)Marcus, Santorini, and
  Marcinkiewicz]{marcus1993building}
Mitchell Marcus, Beatrice Santorini, and Mary~Ann Marcinkiewicz.
\newblock Building a large annotated corpus of english: The penn treebank.
\newblock 1993.

\bibitem[Merity et~al.(2016)Merity, Xiong, Bradbury, and
  Socher]{merity2016pointer}
Stephen Merity, Caiming Xiong, James Bradbury, and Richard Socher.
\newblock Pointer sentinel mixture models.
\newblock \emph{arXiv preprint arXiv:1609.07843}, 2016.

\bibitem[Mhammedi et~al.(2017)Mhammedi, Hellicar, Rahman, and
  Bailey]{Mhammadi16}
Zakaria Mhammedi, Andrew~D. Hellicar, Ashfaqur Rahman, and James Bailey.
\newblock Efficient orthogonal parametrisation of recurrent neural networks
  using householder reflections.
\newblock In \emph{Proceedings of ICML 2017}, pages 2401--2409. {PMLR}, 2017.

\bibitem[Mikolov et~al.(2010)Mikolov, Karafi{\'a}t, Burget, Cernock{\`y}, and
  Khudanpur]{mikolov2010recurrent}
Tomas Mikolov, Martin Karafi{\'a}t, Lukas Burget, Jan Cernock{\`y}, and Sanjeev
  Khudanpur.
\newblock Recurrent neural network based language model.
\newblock In \emph{Interspeech}, pages 1045--1048. Makuhari, 2010.

\bibitem[Mucllari et~al.(2022)Mucllari, Zadorozhnyy, Pospisil, Nguyen, and
  Ye]{mucllari2022orthogonal}
Edison Mucllari, Vasily Zadorozhnyy, Cole Pospisil, Duc Nguyen, and Qiang Ye.
\newblock Orthogonal gated recurrent unit with neumann-cayley transformation.
\newblock \emph{arXiv preprint arXiv:2208.06496}, 2022.

\bibitem[Radford et~al.(2018)Radford, Narasimhan, Salimans, Sutskever,
  et~al.]{radford2018improving}
Alec Radford, Karthik Narasimhan, Tim Salimans, Ilya Sutskever, et~al.
\newblock Improving language understanding by generative pre-training.
\newblock 2018.

\bibitem[Raffel et~al.(2020)Raffel, Shazeer, Roberts, Lee, Narang, Matena,
  Zhou, Li, and Liu]{raffel2020exploring}
Colin Raffel, Noam Shazeer, Adam Roberts, Katherine Lee, Sharan Narang, Michael
  Matena, Yanqi Zhou, Wei Li, and Peter~J Liu.
\newblock Exploring the limits of transfer learning with a unified text-to-text
  transformer.
\newblock \emph{The Journal of Machine Learning Research}, 21\penalty0
  (1):\penalty0 5485--5551, 2020.

\bibitem[Reilly and Das(2024)]{reilly2024just}
Dominick Reilly and Srijan Das.
\newblock Just add?! pose induced video transformers for understanding
  activities of daily living.
\newblock In \emph{Proceedings of the IEEE/CVF Conference on Computer Vision
  and Pattern Recognition}, pages 18340--18350, 2024.

\bibitem[Rumelhart et~al.(1985)Rumelhart, Hinton, Williams,
  et~al.]{rumelhart1985learning}
David~E Rumelhart, Geoffrey~E Hinton, Ronald~J Williams, et~al.
\newblock Learning internal representations by error propagation, 1985.

\bibitem[Thoppilan et~al.(2022)Thoppilan, De~Freitas, Hall, Shazeer,
  Kulshreshtha, Cheng, Jin, Bos, Baker, Du, et~al.]{thoppilan2022lamda}
Romal Thoppilan, Daniel De~Freitas, Jamie Hall, Noam Shazeer, Apoorv
  Kulshreshtha, Heng-Tze Cheng, Alicia Jin, Taylor Bos, Leslie Baker, Yu Du,
  et~al.
\newblock Lamda: Language models for dialog applications.
\newblock \emph{arXiv preprint arXiv:2201.08239}, 2022.

\bibitem[Vaswani et~al.(2017)Vaswani, Shazeer, Parmar, Uszkoreit, Jones, Gomez,
  Kaiser, and Polosukhin]{vaswani2017attention}
Ashish Vaswani, Noam Shazeer, Niki Parmar, Jakob Uszkoreit, Llion Jones,
  Aidan~N Gomez, {\L}ukasz Kaiser, and Illia Polosukhin.
\newblock Attention is all you need.
\newblock \emph{Advances in neural information processing systems}, 30, 2017.

\bibitem[Wang et~al.(2019)Wang, Ma, Liu, and Tang]{wang2019r}
Zhiwei Wang, Yao Ma, Zitao Liu, and Jiliang Tang.
\newblock R-transformer: Recurrent neural network enhanced transformer.
\newblock \emph{arXiv preprint arXiv:1907.05572}, 2019.

\bibitem[Wisdom et~al.(2016)Wisdom, Powers, Hershey, Le~Roux, and
  Atlas]{Wisdom16}
Scott Wisdom, Thomas Powers, John Hershey, Jonathan Le~Roux, and Les Atlas.
\newblock Full-capacity unitary recurrent neural networks.
\newblock In \emph{Advances in Neural Information Processing Systems 29}, pages
  4880--4888, 2016.

\bibitem[Wu et~al.(2020)Wu, Lan, Gu, and Yu]{wu2020memformer}
Qingyang Wu, Zhenzhong Lan, Jing Gu, and Zhou Yu.
\newblock Memformer: The memory-augmented transformer.
\newblock 2020.

\end{thebibliography}
}
\appendix
\onecolumn

\begin{center}
    \rule{\textwidth}{4pt}
    \vspace{4pt}
    
    {\Large\textbf{Compact Recurrent Transformer with Persistent Memory}} 
    
    {\Large\textbf{Appendix}}

    \vspace{4pt}
    \rule{\textwidth}{1pt}
\end{center}

\setcounter{section}{0}
\renewcommand\thesection{\Alph{section}}
\renewcommand\thesubsection{\thesection.\arabic{subsection}}

\section{Additional Recurrence Analysis from Experiments}\label{rec_exp_sup}

In this section, we will examine an analysis of the differences observed when using GRU versus NCGRU in our experiments conducted on the Word PTB, Wiki Text-103, and Toyota Smarthome datasets.

\subsection{Word PTB}\label{subsec:sup_wordptb}

\begin{table}[ht]
\caption{Word-level PTB Perplexity (PPL) results for 3-layer models using GRU and NCGRU.}
\label{wptbthree_sup}
\begin{center}
\begin{small}
\begin{sc}
\begin{tabular}{lccr}
\toprule
Model & Memory/Seg Len & PPL \\
\midrule
CRT - GRU & 1 / 70 & 59.4 \\
CRT - NCGRU  & 1 / 70 & 58.3 \\
\midrule
CRT - GRU & 1 /  35 & 61.0 \\
CRT - NCGRU & 1 /  35 & 61.0 \\
\midrule
CRT - GRU  & 1 /  17 & 65.6 \\
CRT - NCGRU & 1 / 17 & 66.0 \\
\bottomrule
\end{tabular}
\end{sc}
\end{small}
\end{center}
\vskip -0.1in
\end{table}

In Table \ref{wptbthree_sup}, we observe that replacing GRU with NCGRU in the recurrence part of our proposed architecture improves performance when the sequence length in Word PTB is set to 70. For a sequence length of 35, the performance remains identical, while for a sequence length of 17, GRU slightly outperforms NCGRU. Similarly, in Table \ref{wptbsixteen_sup}, which presents results for models with 16 layers, NCGRU outperforms GRU in two experiments. When comparing sequence lengths, NCGRU and GRU demonstrate very similar performance, with perplexity scores of 58.3 and 58.2, respectively.

\begin{table}[ht]
\caption{Word PTB Perplexity (PPL) results on 16-layer models using NCGRU and GRU.}
\label{wptbsixteen_sup}
\begin{center}
\begin{small}
\begin{sc}
\begin{tabular}{lccr}
\toprule
Model & Memory  / Seg Len & PPL \\
\midrule
CRT - GRU & 1 / 70 & 56.5 \\
CRT - NCGRU   & 1 / 70 & 55.7 \\
\midrule
CRT - GRU & 1 / 35 & 58.2 \\
CRT - NCGRU   & 1 / 35 & 58.3 \\
\midrule
CRT - GRU & 1 / 17 & 64.2 \\
CRT - NCGRU   & 1 / 17 & 63.0 \\
\bottomrule
\end{tabular}
\end{sc}
\end{small}
\end{center}
\vskip -0.3in
\end{table}

\subsection{WikiText103}\label{subsec:wiki_sup}

\begin{table}[ht]
\caption{Wiki Text-103 PPL results on 3-layer models using NCGRU and GRU.}
\label{wikitext103three_sup}
\begin{center}
\begin{small}
\begin{sc}
\begin{tabular}{lccr}
\toprule
Model & Memory / Seg Len & PPL \\
\midrule
CRT - GRU   & 1 / 150 & 32.3 \\
CRT - NCGRU   & 1 / 150 & 31.8 \\
\midrule
CRT - GRU   & 1 / 70 & 36.6 \\
CRT - NCGRU    & 1 / 70 & 35.8 \\
\midrule
CRT - GRU   & 1 / 35 & 43.0 \\
CRT - NCGRU    & 1 / 35 & 43.0 \\
\bottomrule
\end{tabular}
\end{sc}
\end{small}
\end{center}
\vskip -0.1 in
\end{table}

Following a similar analysis to Section \ref{subsec:sup_wordptb}, Table \ref{wikitext103three_sup} demonstrates that using a three-layer model with NCGRU in place of GRU at the recurrence yields improved performance for sequence lengths of 150 and 70 on WikiText103. However, for sequence length 35, both NCGRU and GRU produce identical results. Furthermore, as shown in Table \ref{wikitext103sixteen_sup}, the outcomes for NCGRU and GRU are nearly equivalent.

\begin{table}[ht]
\caption{Wiki Text-103 PPL results on 16-layer models using NCGRU and GRU}
\label{wikitext103sixteen_sup}
\begin{center}
\begin{small}
\begin{sc}
\begin{tabular}{lccr}
\toprule
Model & Memory / Seg Len & PPL \\
\midrule
CRT - GRU   & 1 / 150 & 25.4  \\
CRT - NCGRU & 1 / 150 & 25.8 \\
\midrule
CRT - GRU  & 1 / 70 & 28.6  \\
CRT - NCGRU & 1 / 70 & 28.8 \\
\midrule
CRT - GRU  & 1 / 35 & 35.5  \\
CRT - NCGRU & 1 / 35 & 35.8 \\
\bottomrule
\end{tabular}
\end{sc}
\end{small}
\end{center}
\vskip -0.2in
\end{table}

\subsection{Toyota Smarthome}

\begin{table}[ht]
\caption{Mean class accuracy (mCA) reported for cross-subject (CS) evaluation in Toyota Smarthome Dataset using NCGRU and GRU.}
\label{toyota_sup}
\begin{center}
\begin{small}
\begin{sc}
\begin{tabular}{lccr}
\toprule
Model & mCA \\
\midrule
CRViT - NCGRU  & 72.2 \\
CRViT - GRU  & 73.4 \\
\bottomrule
\end{tabular}
\end{sc}
\end{small}
\end{center}
\vskip -0.1in
\end{table}

As discussed in Section \ref{subsec:sup_wordptb} and Section \ref{subsec:wiki_sup}, most experiments show that replacing GRU with NCGRU leads to improved results, primarily because the orthogonality in NCGRU aids in retaining more information across the two language datasets. However, in the Toyota Smarthome dataset experiments, where the sequence length (number of frames per video) is set to a short value of 16, NCGRU offers limited benefits. GRU effectively transfers information from each video frame due to the shorter sequence length. This is evident in Table \ref{toyota_sup}, where GRU outperforms NCGRU in the proposed CRViT model, achieving mCA scores of 73.4 and 72.2, respectively. Moreover, as previously mentioned, GRU's performance also sets a state-of-the-art benchmark for the Toyota Smarthome dataset.

\section{Proof of Equation \ref{eq:boundequation} in Section \ref{sec:derivatives}}

In this section, we present the detailed analysis of gradient presented in section \ref{sec:derivatives}.

We consider the GRU architecture  outlined below:

\begin{equation}
    \label{model:GRU}
    \begin{split}
        r_t &= \sigma \left(W_rx_t + U_r h_{t-1} + b_r\right)\\
        u_t &= \sigma \left(W_ux_t + U_u h_{t-1} + b_u\right)\\
        c_t &= \Phi \left(W_cx_t + U_c \left(r_t \odot h_{t-1}\right) + b_c\right)\\
        h_t &= \left(1 - u_t\right) \odot h_{t-1} + u_t \odot c_t
    \end{split}
\end{equation}

We need some properties related to GRU derivatives given in \cite{mucllari2022orthogonal}:

\begin{lemma} \cite{mucllari2022orthogonal}
    \label{lem:ncgrupaper}
    Let $h_{t-1}$ and $h_{t}$ be two consecutive hidden states from the GRU model. Then
    \begin{equation}
    \begin{split}
        \left\|\frac{\partial h_{t}}{\partial h_{t-1}}\right\|_2 \leq \alpha + \beta\|U_c\|_2
    \end{split}
    \end{equation}
    where $\alpha$ and $\beta$ are given as in \cite{mucllari2022orthogonal}:
    \begin{equation}
        \begin{split}
\alpha &= \delta_u\left( \max _i\left\{[h_{t-1}]_i\right\} + \max _i\left\{[c_t]_i\right\}\right)\|U_u\|_2  + \max _i\left\{(1-[u_t]_i)\right\}
        \end{split}
    \end{equation}
    and
    \begin{equation}
        \begin{split}
            \beta &= \max _i\left\{[u_t]_i\right\}\left(\delta_r \|U_r\|_2 \max _i \left\{[h_{t-1}]_i\right\}\right. \left.+ \max _i \left\{[r_t]_i\right\} \right),
        \end{split}
    \end{equation}
    with constants $\delta_u$ and $\delta_r$ defined as follows:
    \begin{equation}
    \begin{split}
        \delta_u = \max_i \left\{\left[u_t\right]_i \left(1-\left[u_t\right]_i\right)\right\}
        \end{split}
    \end{equation}
    and
    \begin{equation}
    \begin{split}
        \delta_r = \max_i \left\{\left[r_t\right]_i \left(1-\left[r_t\right]_i\right)\right\}.
        \end{split}
    \end{equation} 
\end{lemma}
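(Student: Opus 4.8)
The plan is to compute the Jacobian $\partial h_t/\partial h_{t-1}$ directly from the GRU update in (\ref{model:GRU}) and then bound its spectral norm group by group. I would start by differentiating the last line, $h_t = (1-u_t)\odot h_{t-1} + u_t\odot c_t$, treating $h_{t-1}$ as the variable. Since $h_{t-1}$ enters both explicitly and through each gate $u_t, r_t, c_t$, the product rule gives
\begin{equation}
\frac{\partial h_t}{\partial h_{t-1}} = \mathrm{diag}(1-u_t) + \mathrm{diag}(c_t - h_{t-1})\frac{\partial u_t}{\partial h_{t-1}} + \mathrm{diag}(u_t)\frac{\partial c_t}{\partial h_{t-1}},
\end{equation}
where the $\mathrm{diag}(c_t-h_{t-1})$ factor arises from collecting the $+\mathrm{diag}(c_t)$ contribution of the $u_t\odot c_t$ term with the $-\mathrm{diag}(h_{t-1})$ contribution of the $(1-u_t)\odot h_{t-1}$ term.

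Next I would substitute the gate derivatives. Using $\sigma'=\sigma(1-\sigma)$ for the two sigmoidal gates yields $\partial u_t/\partial h_{t-1} = \mathrm{diag}(u_t\odot(1-u_t))\,U_u$ and $\partial r_t/\partial h_{t-1}=\mathrm{diag}(r_t\odot(1-r_t))\,U_r$. The candidate derivative requires a nested chain rule through the reset gate:
\begin{equation}
\frac{\partial c_t}{\partial h_{t-1}} = \mathrm{diag}(\Phi'(\cdot))\,U_c\bigl(\mathrm{diag}(r_t) + \mathrm{diag}(h_{t-1})\,\mathrm{diag}(r_t\odot(1-r_t))\,U_r\bigr),
\end{equation}
since $\partial(r_t\odot h_{t-1})/\partial h_{t-1} = \mathrm{diag}(r_t) + \mathrm{diag}(h_{t-1})\,\partial r_t/\partial h_{t-1}$.

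Then I would bound each of the three groups using the triangle inequality, submultiplicativity $\|AB\|_2\le\|A\|_2\|B\|_2$, and the identity $\|\mathrm{diag}(v)\|_2=\max_i|v_i|$. The explicit term contributes $\max_i(1-[u_t]_i)$. For the $u_t$-gate group, $\|\mathrm{diag}(u_t\odot(1-u_t))\|_2=\delta_u$ and $\max_i|[c_t]_i-[h_{t-1}]_i|\le\max_i[h_{t-1}]_i+\max_i[c_t]_i$, which together with $\|U_u\|_2$ produces the remaining piece of $\alpha$. For the candidate group, bounding $\|\mathrm{diag}(\Phi'(\cdot))\|_2\le 1$ (valid for $\Phi=\tanh$), factoring out $\|U_c\|_2$, and bounding the bracket by $\max_i[r_t]_i + \delta_r\|U_r\|_2\max_i[h_{t-1}]_i$ gives exactly $\beta\|U_c\|_2$. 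Summing the three bounds reproduces $\alpha+\beta\|U_c\|_2$.

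The main obstacle is bookkeeping rather than any single sharp estimate: correctly collecting the explicit and gate-mediated contributions (especially the $\mathrm{diag}(c_t-h_{t-1})$ combination) and tracking the nested chain rule through $r_t$ inside $c_t$, so that each diagonal factor lands in the correct group and the final grouping matches the stated $\alpha$ and $\beta$. A minor subtlety is that the stated bounds suppress absolute values on $[h_{t-1}]_i$ and $[c_t]_i$; I would note these hold as written under the paper's sign convention (or with $|\cdot|$ inserted), and that the bound $\|\mathrm{diag}(\Phi'(\cdot))\|_2\le 1$ relies on the standard $\tanh$ activation.
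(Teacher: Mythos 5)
Your proposal is correct, and it reconstructs essentially the argument the paper relies on: this lemma is not proved in the paper itself but cited to the appendix of \cite{mucllari2022orthogonal}, where the proof proceeds exactly as you do --- product-rule decomposition of $\partial h_t/\partial h_{t-1}$ into the explicit term, the update-gate term with the $\mathrm{diag}(c_t-h_{t-1})$ combination, and the candidate term with the nested chain rule through $r_t$, followed by triangle-inequality and submultiplicativity bounds using $\|\mathrm{diag}(v)\|_2=\max_i|v_i|$. Your remark about the suppressed absolute values on $[h_{t-1}]_i$ and $[c_t]_i$ is also apt, since with $\Phi=\tanh$ these entries lie in $[-1,1]$ and the stated constants should be read with $|\cdot|$ (consistent with Lemma \ref{lem:gruderiv}).
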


An additional crucial result concerning the derivatives of GRU is the following:

\begin{lemma}\cite{mucllari2022orthogonal}
\label{lem:gruderiv}

For the hyperbolic tangent activation function in (\ref{model:GRU}) (i.e. $\Phi=\,$\verb|tanh|), we have $\delta_u, \delta_r \le \frac{1}{4}$, $[h_t]_i \le 1$ for any $i$ and $t$ as well as 
    \begin{equation}
        \alpha \le \frac{1}{2}\|U_u\|_2+1 \quad\text{and}\quad \beta\leq \frac{1}{4}\|U_r\|_2+1.
    \end{equation}
\end{lemma}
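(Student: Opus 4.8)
The plan is to derive all four bounds directly from the closed-form expressions for $\alpha$, $\beta$, $\delta_u$, and $\delta_r$ given in Lemma~\ref{lem:ncgrupaper}, by inserting two elementary facts: a scalar maximization estimate for the gate-derivative quantities, and a uniform coordinatewise bound on the hidden states. The entire argument is a substitution once these two facts are in place.

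First I would treat $\delta_u$ and $\delta_r$. Since $u_t$ and $r_t$ are produced by the logistic sigmoid in (\ref{model:GRU}), each coordinate $[u_t]_i, [r_t]_i$ lies in $(0,1)$. The scalar map $g(x)=x(1-x)$ on $[0,1]$ is concave with a unique maximum at $x=\tfrac12$, where $g(\tfrac12)=\tfrac14$; hence $[u_t]_i(1-[u_t]_i)\le \tfrac14$ and likewise for $r_t$. Taking the maximum over $i$ yields $\delta_u,\delta_r\le \tfrac14$ immediately.

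Next I would establish $[h_t]_i\le 1$. The key structural observation is that the update $h_t=(1-u_t)\odot h_{t-1}+u_t\odot c_t$ is, coordinatewise, a convex combination of $[h_{t-1}]_i$ and $[c_t]_i$ with weights $1-[u_t]_i$ and $[u_t]_i$ lying in $[0,1]$ and summing to one. Because $\Phi=\tanh$ forces $[c_t]_i\in(-1,1)$, I would argue by induction on $t$: assuming the initial state satisfies $|[h_0]_i|\le 1$ (e.g.\ the zero initialization used in the architecture), the triangle inequality applied to the convex combination gives $|[h_t]_i|\le (1-[u_t]_i)\cdot 1+[u_t]_i\cdot 1=1$, so in particular $[h_t]_i\le 1$ for all $i$ and $t$. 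I expect this induction to be the only step with genuine content; the base case and the observation that the gate weights form a partition of unity are what make it go through, and everything else reduces to plugging in range constraints.

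Finally I would substitute these facts into the formulas for $\alpha$ and $\beta$. Using $\delta_u\le\tfrac14$, $\max_i[h_{t-1}]_i\le 1$, $\max_i[c_t]_i\le 1$, and $\max_i(1-[u_t]_i)\le 1$ in the expression for $\alpha$ gives $\alpha\le \tfrac14(1+1)\|U_u\|_2+1=\tfrac12\|U_u\|_2+1$. Similarly, using $\max_i[u_t]_i\le 1$, $\delta_r\le\tfrac14$, $\max_i[h_{t-1}]_i\le 1$, and $\max_i[r_t]_i\le 1$ in the expression for $\beta$ gives $\beta\le \tfrac14\|U_r\|_2+1$. The main obstacle, such as it is, lies only in stating the hidden-state induction cleanly; no delicate estimates are needed beyond the sigmoid and $\tanh$ range bounds.
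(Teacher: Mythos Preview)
Your proposal is correct: the bounds on $\delta_u,\delta_r$ follow from $x(1-x)\le\tfrac14$ on $[0,1]$, the hidden-state bound is the convex-combination induction you describe, and the final inequalities are the straightforward substitutions you write out. The present paper does not actually supply its own proof of this lemma---it simply defers to the appendix of \cite{mucllari2022orthogonal}---so there is no detailed argument here to compare against; your derivation is exactly the elementary verification one would expect and matches the spirit of the cited source.
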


The proof of Lemma \ref{lem:ncgrupaper} and Lemma \ref{lem:gruderiv} can be found in the Appendix in \cite{mucllari2022orthogonal}.
Furthermore, it is shown that in the case of an NCGRU, when $u_t$ and $r_t$ are approximately either the zero vector or the vector of all ones, the following holds:
\begin{equation}
       \alpha + \beta\|U_c\|_2 \lesssim 1.
    \end{equation}
 
We can now advance to the proof of our main theorem.

\begin{theorem}
\label{thm:derivativememory} 
Consider the CRT model with GRU defined in (\ref{model:GRU}).
Let $y_k \in \mathbb{R}^{d_m}$ be the output of the Transformer at iteration $t$, where $tn+1 \leq k \leq (t+1)n$ and $n$ represents the segment length in each iteration, and let $y_i \in \mathbb{R}^{d_m}$ be the output of the  Transformer layer at iteration $p$, where $pn+1 \leq i \leq (p+1)n$. The derivative of $y_k$ with respect to $y_i$ when $t>p$ is expressed as:
\begin{equation}
\label{eq:derivativethmappend}
    \begin{split}
        \frac{\partial y_k}{\partial y_i} & = \frac{\partial y_k}{\partial m_{tn}}  \frac{\partial h_{tn}}{\partial h_{i}}  \frac{\partial h_i}{\partial y_i}
    \end{split}
\end{equation}
Furthermore, 
\begin{equation}
\label{eq:derivativethmappendbound}
    \begin{split}
     \left\| \frac{\partial y_k}{\partial y_i} \right\|_2   & \leq   (\alpha + \beta\|U_c\|_2)^{tn-i}  
     \left\|\frac{\partial y_k}{\partial m_{tn}}  \right\|_2        
     \left\|  \frac{\partial h_j}{\partial y_i} \right\|_2      
    \end{split}
\end{equation}
where $\alpha, \beta$ are defined in Lemma A.1.

\end{theorem}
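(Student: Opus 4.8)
The plan is to exploit the structural property that, for $p<t$, the only channel by which the iteration-$p$ output $y_i$ can influence the iteration-$t$ output $y_k$ is the single persistent memory vector fed into iteration $t$. This collapses the cross-iteration gradient into a product of three Jacobians; the norm bound then follows from submultiplicativity of the spectral norm together with repeated application of Lemma~\ref{lem:ncgrupaper}.

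First I would establish the chain-rule factorization in \eqref{eq:derivativethmappend}. The key observation is architectural: the Transformer at iteration $t$ processes the fresh segment tokens together with the memory token $m_{tn}$, and $m_{tn}$ equals the last GRU hidden state $h_{tn}$ produced by the continuous memory recurrence. Since the segment tokens of iteration $t$ carry no dependence on $y_i$ (they are new input), every path from $y_i$ to $y_k$ must pass through $m_{tn}$. Tracing this unique path backwards gives $y_i \to h_i \to h_{tn}=m_{tn} \to y_k$, where $h_i$ is the GRU state obtained by feeding $y_i$ as the step-$i$ input $x_i$ in \eqref{model:GRU}. The chain rule then yields $\frac{\partial y_k}{\partial y_i} = \frac{\partial y_k}{\partial m_{tn}}\,\frac{\partial h_{tn}}{\partial h_i}\,\frac{\partial h_i}{\partial y_i}$.

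Next I would derive the bound \eqref{eq:derivativethmappendbound}. Applying submultiplicativity of $\|\cdot\|_2$ to the three-factor product isolates the recurrent term $\frac{\partial h_{tn}}{\partial h_i}$. Because the GRU recurrence runs continuously across segment boundaries (the last hidden state of each iteration initializes the next), this term telescopes as $\frac{\partial h_{tn}}{\partial h_i} = \prod_{j=i+1}^{tn} \frac{\partial h_j}{\partial h_{j-1}}$, a product of $tn-i$ consecutive single-step Jacobians. Bounding each factor by $\alpha+\beta\|U_c\|_2$ via Lemma~\ref{lem:ncgrupaper} and multiplying gives $\big\|\frac{\partial h_{tn}}{\partial h_i}\big\|_2 \le (\alpha+\beta\|U_c\|_2)^{tn-i}$, and reassembling the three factors produces \eqref{eq:derivativethmappendbound}. (I note that the remaining factor written as $\frac{\partial h_j}{\partial y_i}$ in the statement should read $\frac{\partial h_i}{\partial y_i}$.)

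The genuinely delicate step is the factorization, not the estimate: the norm bound is a routine consequence of submultiplicativity and Lemma~\ref{lem:ncgrupaper}, whereas the equality \eqref{eq:derivativethmappend} rests entirely on the claim that cross-segment information is funneled through one vector. Justifying this carefully requires arguing that the Transformer segments are otherwise independent, that the recurrent positional-encoding RNN contributes no additional cross-iteration path, and that all intermediate memory updates inside iteration $t$ are themselves functions of $m_{tn}$, so that $\frac{\partial y_k}{\partial m_{tn}}$ indeed captures the entire within-iteration contribution.
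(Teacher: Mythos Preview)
Your proposal is correct and follows essentially the same approach as the paper: the chain-rule factorization through $m_{tn}=h_{tn}$ and $h_i$, the telescoping $\frac{\partial h_{tn}}{\partial h_i}=\prod_{j=i}^{tn-1}\frac{\partial h_{j+1}}{\partial h_j}$, and the per-step bound from Lemma~\ref{lem:ncgrupaper} are exactly what the paper does. Your added remarks about ruling out side-channels (e.g., the positional-encoding RNN) and the observation that $\frac{\partial h_j}{\partial y_i}$ should read $\frac{\partial h_i}{\partial y_i}$ are more careful than the paper's own treatment.
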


\begin{proof}
(\ref{eq:derivativethmappend}) follows from the chain rule and the fact that  $y_k$ is a function of $ y_i$ through the memory token $ m_{tn}$ defined in the $i$th Transformer,   $ m_{tn}=  h_{tn}$ is a function of $h_i$ defined by the GRU architecture, and the GRU state $h_i$ is a function of its input $y_i$. Using the chain rule again, we have 
\[
\frac{\partial h_{tn}}{\partial h_{i}} = \prod _{j=i}^{tn-1} \frac{\partial h_{j+1}}{\partial h_{j}}. 
\]
Bounding $\frac{\partial h_{j+1}}{\partial h_{j}}$ using Lemma A.1, we obtain the desired bound. 
\end{proof}

In particular, in the case that NCGRU is used, when $u_t$ and $r_t$ are approximately either the zero vector or the vector of all ones, the bound becomes:
\begin{equation}
     \left\| \frac{\partial y_k}{\partial y_i} \right\|_2   \& \leq    
     \left\|\frac{\partial y_k}{\partial m_{tn}}  \right\|_2        
     \left\|  \frac{\partial h_j}{\partial y_i} \right\|_2     
    \end{equation}
\end{document}